\documentclass{article} 
\usepackage{iclr2027_conference,times}

\usepackage{amsmath,amsfonts,bm}

\def\eqref#1{equation~\ref{#1}}

\def\1{\bm{1}}

\DeclareMathAlphabet{\mathsfit}{\encodingdefault}{\sfdefault}{m}{sl}
\SetMathAlphabet{\mathsfit}{bold}{\encodingdefault}{\sfdefault}{bx}{n}

\newcommand{\E}{\mathbb{E}}

\usepackage{hyperref}
\usepackage{url}
\usepackage{fontawesome5}
\usepackage{enumitem}
\usepackage{graphicx} 
\usepackage{wrapfig}
\usepackage{fancyhdr}
\usepackage{booktabs}
\usepackage{marvosym}
\usepackage{multirow}
\usepackage{tcolorbox}
\usepackage{xcolor}
\definecolor{citeblue}{RGB}{0,80,160}
\definecolor{linkred}{RGB}{160,30,30}
\hypersetup{
  colorlinks=true,
  citecolor=citeblue,
  linkcolor=linkred,
  urlcolor=citeblue
}
\definecolor{takeawayframe}{HTML}{6F8F76}
\definecolor{takeawayback}{HTML}{F1F4EE}
\newtcolorbox{takeaway}{colframe=takeawayframe, colback=takeawayback,
  boxrule=0.5pt, arc=1.5pt, left=5pt, right=5pt, top=3pt, bottom=3pt}
\usepackage{amsthm}
\newtheorem{proposition}{Proposition}

\title{Understanding On-Policy Distillation: A Mechanistic Interpretability Perspective via Sparse Crosscoders}

{\footnotetext{Corresponding author.}}

\author{Zichao Yu\textsuperscript{1} \quad Qianshuo Ye\textsuperscript{2} \quad
Xu Wang\textsuperscript{1,3} \quad Difan Zou\textsuperscript{1,3,\Letter} \\
\textsuperscript{1}School of Computing and Data Science, The University of Hong Kong \\
\textsuperscript{2}Department of Computer Science and Technology, University of Cambridge \\
\textsuperscript{3}Shenzhen Loop Area Institute \\
\texttt{yuzichao1015@gmail.com}, \texttt{qy250@cam.ac.uk}, \\
\texttt{sunny615@connect.hku.hk}, \texttt{dzou@cs.hku.hk} \\
Project page: \faGlobe\ \url{https://yzc-666.github.io/understanding-opd-crosscoders/}
}
\iclrfinalcopy 
\begin{document}

\maketitle
\lhead{Preprint.}

\begin{abstract}
On-policy distillation (OPD) has emerged as a widely adopted post-training
technique for LLM reasoning. It is commonly believed to transfer knowledge from
a stronger teacher to the student, yet what OPD actually distills into the
student's internal representations remains unclear. In this study, we examine
this question through the lens of sparse crosscoders, which learn one feature
dictionary shared by the student before and after OPD and the teacher. Standard
crosscoder analyses, however, identify features specific to one model but
cannot tell how a model's use of its features changes, since the crosscoder
encodes all models into a single set of feature activations. We therefore
propose the swap readout, which reads the feature activations of each student
checkpoint on its own and thus measures how training changes the student's use
of each feature, even for checkpoints the crosscoder has never seen.
Specifically, across three OPD settings, we observe that OPD neither creates
features nor passes on the teacher's own, and leaves the firing rates of over
$98\%$ of the student's frequently used features within
$20\%$. These results suggest that OPD primarily reweights the features the
student already shares with the teacher rather than acquiring new ones. We
further examine the SFT warm-up on the teacher's rollouts that commonly precedes
OPD and makes it more effective. Rather than adding features, as one might
expect, the warm-up reweights the shared ones in two ways. First, it already
raises and lowers many of the features that OPD later raises and lowers, doing
part of OPD's work in advance. Second, it changes features in ways that OPD
alone would not, most notably those for the conversation format, the style of
reasoning, and mathematical notation, and these changes persist through OPD. Imposing this reweighting on the
features of a directly distilled student, without changing its weights, brings
its accuracy close to that of the warmed-up student, whereas the same change on
shuffled features does not. Together, these findings suggest
that OPD behaves more like a \emph{reweighting} of existing features than an
\emph{acquisition} of new ones: the student learns from the teacher how to use
the features they already share.
\end{abstract}

\section{Introduction}
\label{sec:intro}

\begin{figure}[t]
\centering
\includegraphics[width=\linewidth]{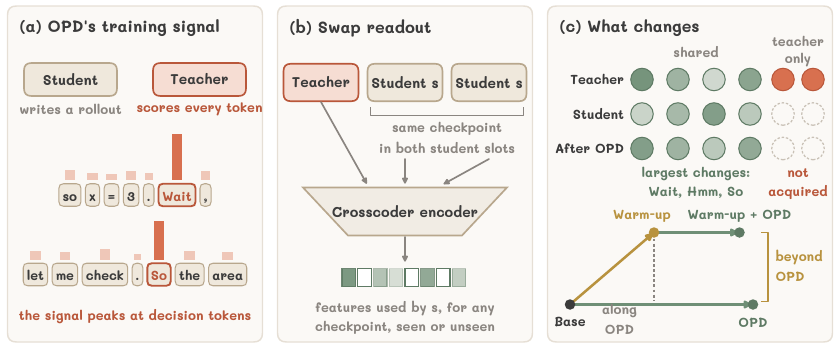}
\vskip -.1in
\caption{\textbf{Overview.} (a)~In OPD, the student writes a rollout and the
teacher scores every token; the signal is strongest at decision tokens. (b)~The swap
readout places a student checkpoint in both student slots of the crosscoder,
with the teacher's activation fixed, and reads the features the checkpoint
uses. (c)~OPD reweights the features the student shares with the teacher, most
at decision tokens, and acquires none of the teacher's own; the warm-up moves
the student partly along OPD's direction and partly beyond it.}
\label{fig:overview}
\vskip -16pt
\end{figure}

On-policy distillation
(OPD)~\citep{agarwal2024policy,gu2024minillm,lu2025onpolicydistillation} has
become a standard post-training paradigm for reasoning models, adopted by
Qwen3~\citep{yang2025qwen3}, MiMo-V2-Flash~\citep{xiao2026mimo},
GLM-5~\citep{zeng2026glm}, and Kimi~K3~\citep{team2026kimi}. It is widely
believed to let the student acquire new capabilities from a stronger
teacher~\citep{li2026rethinking}, yet the evidence is mixed: OPD can fail to
improve on, or even degrade, the student it starts
from~\citep{li2026rethinking,zhu2026many}, and it mainly improves sampling
efficiency rather than expanding the student's capability
boundary~\citep{ge2026towards}. This raises a basic question: \emph{what does
the student actually learn from the teacher in OPD?} Does it acquire something
new, or does it learn to use what it already has differently?

Existing analyses of OPD, including those above, study it through the student's
outputs, namely its token probabilities, accuracy, and training
signal~\citep{li2026rethinking,ding2026does,ge2026towards}, and do not reveal
what changes inside the student. We examine the question from the student's
internal representations, through the lens of sparse
crosscoders~\citep{lindsey2024sparse}, which learn one dictionary of features
for several models at once, so that the student before OPD, the student after
OPD, and the teacher can be compared feature by
feature~\citep{minder2026overcoming,shi2026does}. Standard crosscoder analyses
compare models through their decoders and identify the features specific to
one model. They cannot, however, reliably show which features each model
activates on a given input, since the crosscoder encodes all models jointly
into a single set of feature activations; \citet{minder2026overcoming} likewise
note that crosscoders provide no mechanism to track how a model's feature
activations change. We therefore propose the \emph{swap readout}
(Figure~\ref{fig:overview}b). It places the activation of a student checkpoint
in both student inputs of the
crosscoder while holding the teacher's input fixed, and thereby reads the
feature activations of that checkpoint on its own, even for checkpoints the
crosscoder has never seen.

We apply this readout to three OPD settings whose teachers differ from the
student by RL, by scale, or by both. In all three, OPD neither creates features
of the student's own nor passes on the teacher's
(Figure~\ref{fig:overview}c). It only changes how often the student uses the
features it already shares with the teacher, and only slightly: over $98\%$ of
the frequently used features change their firing rate by less than $20\%$.
Where OPD changes the student most, the features that change most fire on words
such as \emph{Wait}, \emph{Hmm}, and \emph{So}, at which a reasoning trace
decides its next move and at which the teacher disagrees with the student most
(Figure~\ref{fig:overview}a).
These findings give a representation-level account of behavioral findings on
OPD: if OPD can only reweight the features the student already shares with the
teacher, it should improve sampling efficiency without expanding the capability
boundary~\citep{ge2026towards}, and it works only when the student and the
teacher share compatible thinking patterns~\citep{li2026rethinking}.

If OPD only changes how the student uses what it already has, then what the
student starts from should matter. This brings us to a step that commonly
precedes OPD: an SFT warm-up on the teacher's rollouts, which makes OPD more
effective~\citep{yang2025qwen3,lu2025onpolicydistillation,li2026rethinking,liu2026simple}.
A natural explanation for this benefit is that the warm-up supplies what OPD
cannot, namely new features that bring the student closer to the teacher;
indeed, SFT has been found to introduce new features~\citep{shi2026does}. Following
the setup of Simple-OPD~\citep{liu2026simple}, we reproduce this benefit but
find the opposite: the warm-up creates no features and does not make the
teacher's features shared. Instead, it also reweights the shared features,
partly along OPD's direction, making part of OPD's change in advance, and
partly in directions that OPD does not take and that persist through OPD. This
reweighting offers a feature-level view of the teacher-compatible thinking
pattern to which Simple-OPD attributes the warm-up's benefit. Imposing this
reweighting on the features of a directly distilled student,
without changing its weights, recovers most of the warm-up's benefit, and
removing it from the warmed-up student removes most of it, whereas the same
change on shuffled features does neither. Together, these findings suggest
that OPD behaves more like a \emph{reweighting} of existing features than an
\emph{acquisition} of new ones.
Our contributions and findings are summarized as follows:
\begin{itemize}[leftmargin=*]
\item We propose the swap readout, which measures how training changes a
student's use of each crosscoder feature, even for checkpoints the crosscoder
has never seen; the change it identifies can be imposed on a student's features
and alters the student's accuracy. Unlike existing crosscoder analyses, which
identify features specific to one
model~\citep{lindsey2024sparse,minder2026overcoming,shi2026does}, it tracks how
a model's use of shared features changes.
\item We show that OPD distills no new features into the student; it only
reweights the features the student already shares with the teacher. Whereas
prior analyses of OPD examine the student's
outputs~\citep{li2026rethinking,ding2026does,ge2026towards}, this finding
concerns the student's internal representations.
\item We show that an SFT warm-up on the teacher's rollouts adds no features
either, unlike SFT on solutions written by a separate, stronger
model~\citep{shi2026does}; it reweights the shared features partly along OPD's direction and
partly beyond it, and imposing this reweighting on the features of a directly
distilled student, without changing its weights, brings its accuracy close to
that of the warmed-up student.
\end{itemize}

\section{Measuring Feature Usage with Sparse Crosscoders}
\label{sec:prelim}

\subsection{Sparse Crosscoders and Model Attribution}
\label{sec:prelim-background}

\textbf{Crosscoders.}
Sparse crosscoders~\citep{lindsey2024sparse} extend sparse autoencoders to model diffing: they learn one dictionary of features for several models at once, so that the models can be compared feature by feature. Given the activations $\mathbf h_m$ of models $m=1,\dots,M$ on the same token, a crosscoder encodes them jointly into a single sparse code $\mathbf z$ and reconstructs each model's activation from this code with that model's own decoder,
\begin{equation}
\label{eq:crosscoder}
\mathbf z=\sigma\Big(\sum_{m}\mathbf W_m\mathbf h_m+\mathbf b\Big),\qquad
\hat{\mathbf h}_m=\mathbf D_m\mathbf z,
\end{equation}
so that each feature has a single activation on a token but a separate decoder direction in every model, the corresponding column of $\mathbf D_m$. The crosscoder is trained to minimize the reconstruction error $\sum_m\|\mathbf h_m-\hat{\mathbf h}_m\|_2^2$, so the decoder direction of a feature in a model describes how the feature contributes to that model's activation. We use BatchTopK crosscoders~\citep{bussmann2024batchtopk}, which enforce sparsity through $\sigma$ rather than an $\ell_1$ penalty, whose artifacts can make shared features appear model-specific~\citep{minder2026overcoming}. During training, $\sigma$ keeps, on average over a batch, the $k$ features per token with the largest activations weighted by their decoder norms; at inference, a threshold estimated during training replaces this batch-level selection. We write $\operatorname{enc}$ for the encoder with this threshold and say that a feature fires on a token when its entry of $\mathbf z$ is positive.

\textbf{Model attribution.}
Following \citet{shi2026does}, we measure how much a feature belongs to each model by its share of the decoder norm. The model attribution score (MAS) of feature $j$ for model $m$ is
\begin{equation}
\label{eq:mas}
\mathrm{MAS}(m,j)=\frac{\|\mathbf d_{mj}\|_1/\sqrt{n_m}}{\sum_{m'}\|\mathbf d_{m'j}\|_1/\sqrt{n_{m'}}},
\end{equation}
where $\mathbf d_{mj}$ is the decoder direction of feature $j$ in model $m$, the $j$-th column of $\mathbf D_m$, and dividing by the square root of the hidden size $n_m$ corrects for models of different width.
A feature used equally by $M$ models has an MAS of $1/M$ for each, and we say that model $m$ dominates a feature when its MAS exceeds $0.5$.
For two models and without the width correction, the MAS of the second model is the normalized relative norm (NRN) of \citet{shi2026does}: it is near $1$ for a feature specific to the second model, near $0$ for one specific to the first, and $1/2$ for a shared one.

\subsection{The Swap Readout: Tracking Feature Usage across Checkpoints}
\label{sec:prelim:swap}
\textbf{Why existing readouts fall short.} We ask which features the student uses on each input and how training changes this use. Each crosscoder is trained jointly on the student before OPD ($B$), the student after OPD ($O$), and the teacher ($T$). Its \emph{joint encoding} gives one code for all three models and cannot tell which of them makes a feature fire. \emph{Decoder attribution} such as MAS barely differs between the two students, whose feature directions nearly coincide, whereas training changes \emph{when} the student uses a feature. Reading a student through \emph{its own slot}, with the other student slot empty, is also unreliable. The two students' activations differ by only $7$--$12\%$ of their norm (Table~\ref{tab:app-swap}), so training determines the sum $\mathbf S=\mathbf W_B+\mathbf W_O$ of their encoders but leaves their difference near its random initialization; in the own-slot readout, this undetermined part is as large as the determined one (Appendix~\ref{app:swap}). We therefore propose the swap readout.

\begin{takeaway}
\textbf{Swap readout.} To read a checkpoint $s$ of the student on input $x$, we place its activation in \emph{both} student slots and keep the teacher's activation in the teacher slot:
\begin{equation}
\label{eq:swap}
\mathbf z_s(x)=\operatorname{enc}\!\big(\bar{\mathbf h}_s(x),\,\bar{\mathbf h}_s(x),\,\mathbf h_T(x)\big),
\end{equation}
where $\bar{\mathbf h}_s$ is the student's activation rescaled to the mean norm of the student before training, since the inference threshold makes firing sensitive to the scale of the residual stream.
\end{takeaway}

\textbf{Why the swap readout reads the student.} With the same activation in both student slots, the undetermined difference of the two encoders cancels. For checkpoint $s$, the pre-activation, the argument of $\sigma$ in Eq.~\eqref{eq:crosscoder}, and its change when $s$ is replaced by another checkpoint $s'$ on the same input are
\begin{equation}
\label{eq:swap-preact}
\mathbf a_s(x)=\mathbf S\,\bar{\mathbf h}_s(x)+\mathbf c(x),\qquad
\mathbf a_{s'}(x)-\mathbf a_s(x)=\mathbf S\,\big(\bar{\mathbf h}_{s'}(x)-\bar{\mathbf h}_s(x)\big),
\end{equation}
with $\mathbf c(x)=\mathbf W_T\,\mathbf h_T(x)+\mathbf b$. The crosscoder thus acts as a sparse autoencoder of the student, with the learned encoder $\mathbf S$, the student's decoder, and a bias $\mathbf c(x)$ that the teacher sets for each token. The features that fire are those from which the student's decoder reconstructs the checkpoint's activation, which is what it means for the checkpoint to use them. Since every checkpoint is read with the same autoencoder and the same teacher input, a feature starts or stops firing only because the student's own activation moves along the feature's encoder direction, a row of $\mathbf S$, while the teacher's term cancels. Appendix~\ref{app:swap} shows that no other weighting of the two student slots removes the undetermined part while agreeing with the joint encoding when the two students coincide.

Reading the student before and after training on the same input gives, for every feature, one of four outcomes (Figure~\ref{fig:swap-outcomes}): the feature fires in \emph{neither} checkpoint, fires in \emph{both} with a different strength, \emph{starts} to fire, or \emph{stops} firing. Counting these outcomes over many inputs shows how often the student uses each feature and how training changes this use.

\begin{wrapfigure}{r}{0.44\linewidth}
\vspace{-12pt}
\centering
\includegraphics[width=\linewidth]{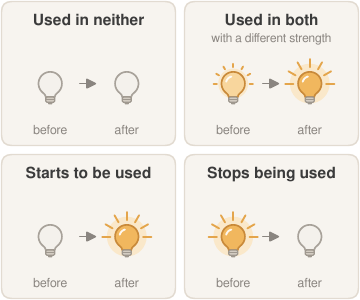}
\caption{\textbf{What the swap readout records for a feature on one input.} A lit bulb marks a feature that fires, and a brighter bulb marks a stronger activation.}
\label{fig:swap-outcomes}
\vspace{-12pt}
\end{wrapfigure}

\textbf{Why the swap readout is reliable.} (i)~\emph{Matched to training.} Its input, in which the two student slots agree, is close to the inputs on which the crosscoder is trained, so the encoder and the inference threshold are used under the conditions in which they were learned. (ii)~\emph{Faithful.} It reconstructs the student's activations as well as the joint encoding does, and it reconstructs checkpoints that the crosscoder never saw, such as the student after the SFT warm-up of Section~\ref{sec:further}, as well as those it was trained on (Appendix~\ref{app:crosscoders}). It thus reads the activation it is given, not the model a slot was trained on.

\textbf{Feature statistics.} We compute feature statistics with this readout on the same $3$M held-out tokens for every model. A feature's \emph{firing rate} is the share of tokens on which it fires, and the change in its use is the \emph{log ratio} of its firing rates after and before training. A feature is \emph{gained} or \emph{lost} when it fires at least $10$ times in one checkpoint and never in the other. For each token, the change is the share of features active before or after training that are not active in both. For a teacher of the student's width, we read its firing rates by placing its activation in all three slots.

\section{What Changes Inside the Student during OPD?}
\label{sec:analysis}

\begin{table}[t]
\centering
\caption{Detailed information of three OPD settings.}
\label{tab:settings}
\begin{tabular*}{\linewidth}{@{\extracolsep{\fill}}lllc@{}}
\toprule
Setting & Base student model & Teacher model & Dimension (student / teacher) \\
\midrule
JustRL  & R1-Distill-Qwen-1.5B & JustRL-DeepSeek-1.5B & 1536 / 1536 \\
Skywork & R1-Distill-Qwen-1.5B & Skywork-OR1-Math-7B  & 1536 / 3584 \\
R1-7B   & R1-Distill-Qwen-1.5B & R1-Distill-Qwen-7B   & 1536 / 3584 \\
\bottomrule
\end{tabular*}
\end{table}

In this section, we ask what OPD distills into the student. We study the three
OPD settings in Table~\ref{tab:settings}, in which a
DeepSeek-R1-Distill-Qwen-1.5B student~\citep{guo2025deepseek} is distilled from
one of three teachers, and we name each setting after its teacher: JustRL for
JustRL-DeepSeek-1.5B~\citep{he2025justrl}, Skywork for
Skywork-OR1-Math-7B~\citep{he2025skywork}, and R1-7B for
DeepSeek-R1-Distill-Qwen-7B~\citep{guo2025deepseek}. All students are trained on
DAPO-Math-17k~\citep{yu2026dapo} with the vanilla OPD recipe of
\citet{li2026rethinking} under the same hyperparameters, and all models are
evaluated on AIME 2024, AIME 2025, and AMC 2023. For each setting, we train one
crosscoder jointly on the base student, the OPD student, and the teacher, using
their residual-stream activations at a middle layer on a mixture of reasoning
and general text, and compare the students through it with the swap readout of
Section~\ref{sec:prelim:swap}. Appendix~\ref{app:experimental-details} gives the
full training, evaluation, and crosscoder details. We first ask whether OPD
gives the student new features (Section~\ref{sec:analysis-reweight}) and then
which of the student's features it reweights (Section~\ref{sec:analysis-local}).

\subsection{OPD Reweights Shared Features Rather Than Acquiring New Ones}
\label{sec:analysis-reweight}

\begin{figure}[t]
\centering
\includegraphics[width=\linewidth]{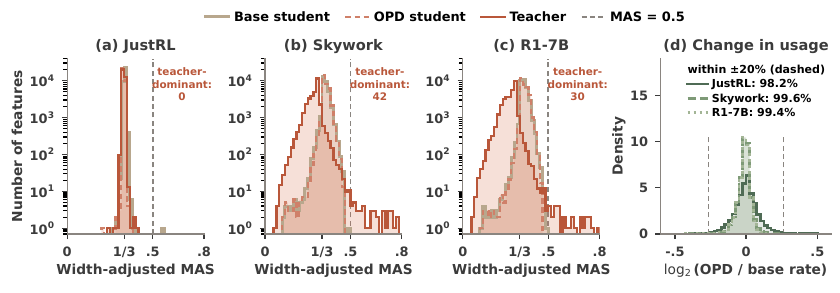}
\vskip -.1in
\caption{\textbf{OPD reweights the student's features and acquires none of the
teacher's.} (a--c)~Width-adjusted MAS of all features; dashed: MAS of $0.5$,
above which one model dominates a feature. (d)~Change in feature firing rates
after OPD; dashed: $\pm 20\%$.}
\label{fig:feature-reweighting}
\vskip -15pt
\end{figure}

Each crosscoder is trained on the
OPD student alongside the base student and the teacher, yet in none of the three
settings does the OPD student dominate a single feature, and its MAS
distribution coincides with that of the base student
(Figure~\ref{fig:feature-reweighting}a--c). The student also uses the same
features to the same extent before and after OPD. In the swap readout on $3$M
held-out tokens, no feature that fires at least $10$ times before OPD is absent
after it, or vice versa, and $98.2\%$, $99.6\%$, and $99.4\%$ of the frequently
used features change their firing rate by less than $20\%$
(Figure~\ref{fig:feature-reweighting}d).

We next turn to the teacher's own features. The JustRL teacher, an RL fine-tune
of the base student, dominates no feature. The Skywork and R1-7B teachers
dominate $42$ and $30$ features (Figure~\ref{fig:feature-reweighting}b,c;
Appendix~\ref{app:feature-examples}). If OPD passed these features on to the
student, the OPD student would take a larger share of their decoder norm, and
they would become shared. It does not: on these features, the student's MAS is
$0.19$ both before and after OPD in both settings, and no feature's MAS changes
by more than $0.015$. The teacher's own features stay with the teacher.

Since OPD neither creates features
nor acquires the teacher's, what it changes is how the student uses the shared
features it already has: which of them fire on a given token, and how strongly.
This reweighting is small: about one in ten of the features active on a token
changes, while each feature's overall usage stays the same
(Figure~\ref{fig:feature-reweighting}d). We characterize this reweighting in
Section~\ref{sec:analysis-local} and examine what an SFT warm-up before OPD
changes in Section~\ref{sec:further}.

\begin{takeaway}
\textbf{Takeaway.} OPD distills no new features into the student and passes on
none of the teacher's own; it only reweights the features the student already
shares with the teacher.
\end{takeaway}

\subsection{Decision Tokens Mark the Largest Feature Changes}
\label{sec:analysis-local}

\begin{figure}[t]
\centering
\includegraphics[width=\linewidth]{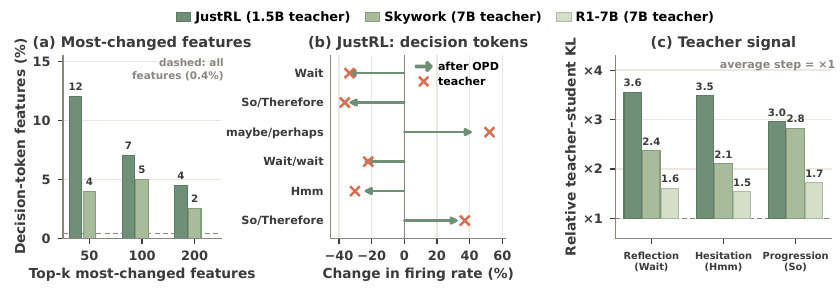}
\vskip -.1in
\caption{\textbf{The features OPD changes most fire on decision tokens.}
(a)~Share of decision-token features among the $k$ most-changed features;
dashed: among all features.
(b)~Under JustRL, change in firing rate of the decision-token features among
the most-changed ones, after OPD (arrows) and in the teacher (crosses).
(c)~Teacher--student KL at the steps that emit each category of decision token
on the student's rollouts, relative to the average step.}
\label{fig:local-reweighting}
\vskip -12pt
\end{figure}

Section~\ref{sec:analysis-reweight} showed that OPD changes each feature's
usage only slightly: $98.2\%$, $99.6\%$, and $99.4\%$ of the frequently used
features change their firing rate by less than $20\%$ under JustRL, Skywork,
and R1-7B. We now ask which features make up the small remainder, reading each
feature's meaning off the tokens on which it fires most strongly
(Appendix~\ref{app:feature-examples}). Under JustRL, the features whose firing rate changes most are dominated by features
for the discourse words that steer a reasoning trace: a feature that fires on
\emph{Wait} fires $35\%$ less often after OPD, one that fires on \emph{So} and
\emph{Therefore} $35\%$ less, and one that fires on \emph{maybe} and
\emph{perhaps} $42\%$ more. Such features make up $0.4\%$ of all features but
$12\%$ of the $50$ whose firing rate changes most
(Figure~\ref{fig:local-reweighting}a). These words mark the points at which a
trace chooses its next move, so we call them decision tokens and group them by
that move into three categories. \emph{Reflection} tokens revisit an earlier
step (\emph{Wait}, \emph{actually}); \emph{hesitation} tokens express doubt or
turn to another approach (\emph{Hmm}, \emph{maybe}, \emph{Alternatively}); and
\emph{progression} tokens move on to the next step or draw a conclusion
(\emph{So}, \emph{Therefore}, \emph{Let}; Appendix~\ref{app:decision} gives the
full lists).

The concentration is strongest where OPD changes the student most.
Decision-token features make up $12\%$ of the $50$ most-changed ones under
JustRL and $4\%$ under Skywork, against a $0.4\%$ baseline; under R1-7B, none
of the $50$ most-changed features fires on a decision token
(Figure~\ref{fig:local-reweighting}a). OPD also moves these features to the
teacher's usage. Under JustRL, the \emph{Wait} feature fires $35\%$ less often
after OPD and $34\%$ less often in the teacher, the \emph{So}/\emph{Therefore}
feature $35\%$ and $36\%$ less, and the \emph{maybe}/\emph{perhaps} feature
$42\%$ and $52\%$ more (Figure~\ref{fig:local-reweighting}b).

Because these features fire at decision tokens, the reweighting is local along
the trace as well. At decision tokens, the student under JustRL and Skywork
replaces $10$--$44\%$ more of its active features than at an average token, and
OPD's change to the student's next-token distribution concentrates at the
steps that emit them, which are $4$--$7$ times over-represented among the
$5\%$ most-changed steps. These are also the steps at which the teacher
disagrees with the student most: the KL divergence from the teacher's
next-token distribution to the student's is $3.0$--$3.6$, $2.1$--$2.8$, and
$1.5$--$1.7$ times its average there under JustRL, Skywork, and R1-7B
(Figure~\ref{fig:local-reweighting}c). OPD thus receives its strongest signal
where the trace decides how to proceed and reweights the features that mark
that decision. Under R1-7B, OPD barely moves the student at all: its
next-token distribution departs from the base student's by a KL of only
$0.008$ on average, a tenth of the $0.085$ under JustRL, so its features
remain essentially unchanged.
\begin{takeaway}
\textbf{Takeaway.} Where OPD changes the student most, the features it changes
most fire on decision tokens such as \emph{Wait}, \emph{Hmm}, and \emph{So}, at
which a trace reflects, hesitates, or moves on. These are the tokens at which
the teacher disagrees with the student most, and OPD moves the firing rates of
these features toward those of the teacher.
\end{takeaway}

\section{Why an SFT Warm-Up Helps OPD}
\label{sec:further}

Section~\ref{sec:analysis} showed that OPD distills no new features into the
student: it only reweights the features the student already shares with the
teacher, most of all at decision tokens. We now apply the same analysis to a
standard step before OPD, an SFT warm-up on the teacher's rollouts, which is
known to make OPD work better~\citep{li2026rethinking,liu2026simple}, and ask
what this warm-up changes in the student. Following the setup of
Simple-OPD~\citep{liu2026simple}, we distill Qwen3-1.7B-Base from
Qwen3-4B-Base-GRPO~\citep{li2026rethinking} and warm up the student on the
teacher's own rollouts before OPD (Appendix~\ref{app:experimental-details}).

\begin{table}[t]
\centering
\small
\caption{\textbf{The SFT warm-up makes OPD work better.} avg@8 (\%), the
accuracy averaged over 8 samples per problem, of the student distilled from
Qwen3-4B-Base-GRPO with and without the warm-up. Recovered: share of the
teacher's advantage over the base student that the student recovers.}
\label{tab:sft-warmup}
\begin{tabular*}{\linewidth}{@{\extracolsep{\fill}}lccccc@{}}
\toprule
 & AIME24 & AIME25 & AMC23 & Avg. & Recovered \\
\midrule
Base student  & 4.6  & 2.1  & 22.5 & 9.7  & --    \\
OPD           & 7.9  & 4.6  & 40.3 & 17.6 & 30\% \\
Warm-up + OPD & 12.1 & 9.6  & 44.4 & 22.0 & 46\% \\
Teacher       & 22.9 & 20.0 & 65.9 & 36.3 & --    \\
\bottomrule
\end{tabular*}
\end{table}

\textbf{The warm-up improves OPD.} With warm-up, OPD recovers more of the teacher's advantage over the base student and improves on every benchmark (Table~\ref{tab:sft-warmup}), confirming prior findings. We next test whether the warm-up adds features, as SFT can~\citep{shi2026does} (Section~\ref{sec:sft-features}), and, finding none, examine what changes instead (Section~\ref{sec:sft-reweight}).

\subsection{The Warm-up Gives the Student No New Features Either}
\label{sec:sft-features}

\begin{figure}[t]
\centering
\includegraphics[width=\linewidth]{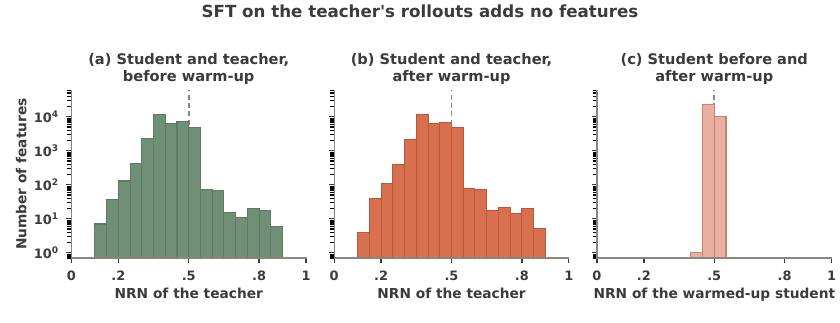}
\caption{\textbf{SFT on teacher rollouts induces no model-specific features.}
NRN~\citep{shi2026does} in two-model crosscoders: a model's share of a
feature's decoder norm, near $1$ if the feature is specific to that model and
$1/2$ (vertical line) if shared. (a,b)~NRN of the teacher against the student
before and after the warm-up. (c)~NRN of the student after the warm-up against
the student before it.}
\label{fig:sft-nrn}
\end{figure}

\textbf{The warm-up creates no features of its own.} In a two-model crosscoder of
the student before and after the warm-up, the test in which SFT produces many
features specific to the fine-tuned model~\citep{shi2026does}, the warm-up
produces none: every feature keeps an NRN close to $1/2$
(Figure~\ref{fig:sft-nrn}c).
Read with the swap readout through the crosscoder trained before the warm-up,
the student after the warm-up also gains no feature, and the frozen crosscoder
reconstructs it as well as the base student.

\textbf{It also does not acquire the teacher's feature.} If the warm-up gave the student
features of the teacher, these features would become shared in the crosscoder
of the student and the teacher: their NRN, the teacher's share of the decoder
norm, would move from near $1$ toward $1/2$, and the tail of teacher-specific
features would thin. It does not. The teacher's NRN is distributed the same way
before and after the warm-up (Figure~\ref{fig:sft-nrn}a,b), with the same mean
($0.42$) and nearly the same number of teacher-specific features above $0.8$
($24$ and $25$). Likewise, in the three-model crosscoder, the features the
teacher dominates stay few ($19$ before the warm-up and $17$ after it) and carry
the same small share of the student's activation.

This differs from \citet{shi2026does}, whose SFT, like a conventional cold start
before RL, trains on solutions written by a separate, stronger model. The
warm-up instead stays within the task that OPD then trains on, math reasoning on
the same questions, and imitates the rollouts of the very teacher that OPD
distills from. Our conclusions concern this warm-up rather than SFT in general.

\begin{takeaway}
\textbf{Takeaway.} Training on the teacher's own rollouts does not give the
student the teacher's features: like OPD, the warm-up adds no features and only
reweights those the student already shares with the teacher.
\end{takeaway}

\subsection{The Warm-Up Anticipates and Complements OPD’s Reweighting}
\label{sec:sft-reweight}

\begin{figure}[t]
\centering
\includegraphics[width=\linewidth]{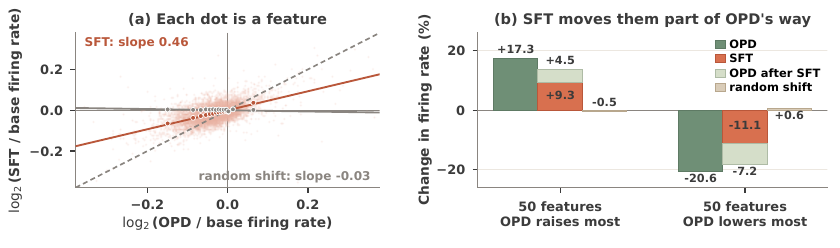}
\caption{\textbf{The SFT warm-up pre-reweights shared features in the direction
of OPD.} (a)~Change in each feature's firing rate after the warm-up against
that after OPD; circles: binned means; dashed: $y=x$, where the warm-up would
make all of OPD's change; solid: fitted slopes for the warm-up and for a random
shift. (b)~Mean change of the features OPD raises and lowers most; stacked: the
extra change OPD adds after the warm-up.}
\label{fig:sft-alignment}
\vskip -16pt
\end{figure}

If the warm-up can only reweight shared features, the question becomes how its
reweighting relates to OPD's. For each feature that fires at least $1{,}000$
times, we compare the change in its firing rate from the base student to the
student after the warm-up with the change to the student after OPD, both read
with the swap readout (Section~\ref{sec:prelim:swap}) through the crosscoder
trained before the warm-up (Figure~\ref{fig:sft-alignment}). We collect these
log changes in firing rate over features into two vectors, $\mathbf o$ for OPD
and $\mathbf s$ for the warm-up, and fit $\mathbf s \approx \beta\,\mathbf o$ by
least squares through the origin. The slope $\beta$ measures how far the
warm-up moves the features along OPD's reweighting. The residual
$\mathbf s - \beta\,\mathbf o$, which is orthogonal to $\mathbf o$, is the part
of the warm-up's change that lies outside OPD's direction.

\textbf{The warm-up moves the features OPD moves.} The two
changes point the same way: along OPD's direction, the warm-up moves the
features about half as far as OPD ($\beta = 0.46$, Figure~\ref{fig:sft-alignment}a), and $88\%$ and $92\%$ of
the $50$ features OPD raises and lowers most move the same way after the
warm-up. For the $50$ features OPD raises most, the warm-up already raises the
firing rate by $9.3\%$ against $17.3\%$ for OPD, and OPD after the warm-up adds
another $4.5$ points (Figure~\ref{fig:sft-alignment}b); for the $50$ features
OPD lowers most, the warm-up lowers it by $11.1\%$ against $20.6\%$, and OPD
adds another $7.2$ points. To check that this alignment reflects the direction of the warm-up's change, not just any change to the student's activations, we rotate the warm-up's change to a random direction; this \emph{random shift} produces no alignment ($\beta = -0.03$; Appendix~\ref{app:decomposition}).

\textbf{It also makes reweighting that OPD does not.} More than half of the
warm-up's change lies outside OPD's direction: the residual carries $56\%$ of
$\|\mathbf s\|^2$. This share excludes a single feature that fires on a garbled
character in web text and that the warm-up nearly silences; with it, the share
rises to $77\%$ (Appendix~\ref{app:decomposition}). This part survives the
subsequent OPD: computing the residual in the same way for the student after the
warm-up and OPD gives one that matches the residual after the warm-up alone
(Spearman $0.69$ across features, against $0.00$ for the random shift). It is
spread over many features, and half of it falls on features that fire mainly on
reasoning text. The features it moves most concern the conversation format (the
system prompt, the instruction on the answer format, and the turn boundaries),
the style of reasoning (features for \emph{Wait}, \emph{Alternatively}, and
\emph{But} fire less, and one for laying out a plan step by step fires more),
and mathematical notation (Appendix~\ref{app:feature-examples}). The student
after the warm-up and OPD thus combines
most of OPD's reweighting with the warm-up's own reweighting, and it recovers
$46\%$ of the teacher's advantage instead of $30\%$.

\begin{table}[t]
\centering
\small
\caption{\textbf{The reweighting found by the swap readout carries the warm-up's
benefit.} avg@8 (\%) of students with the warm-up's feature change added at the
crosscoder's layer or removed from it, with the difference $\Delta$ from the
unmodified student and its bootstrap interval, and their reweighting read with
the swap readout. All rows are sampled in the same way, which differs from
Table~\ref{tab:sft-warmup} (Appendix~\ref{app:feature-intervention}).}
\label{tab:feature-intervention}
\begin{tabular*}{\linewidth}{@{\extracolsep{\fill}}lccccccc@{}}
\toprule
& \multicolumn{5}{c}{avg@8 (\%)} & \multicolumn{2}{c}{Swap readout} \\
\cmidrule(lr){2-6}\cmidrule(l){7-8}
Student & AIME24 & AIME25 & AMC23 & Avg. & $\Delta$ & $\beta$ & $\rho$ \\
\midrule
OPD                       & 10.4 & 7.5 & 37.8 & 18.6          & --                                                   & 1.00 & --      \\
\quad $+$ feature change  & 11.2 & 7.1 & 45.6 & \textbf{21.3} & $\mathbf{+2.7}$\,{\scriptsize$[+0.5, +6.0]$} & 1.32 & $+0.93$ \\
\quad $+$ shuffled change & 7.5  & 4.6 & 35.6 & 15.9          & $-2.7$\,{\scriptsize$[-5.4, +0.3]$}          & 0.95 & $-0.04$ \\
\midrule
Warm-up + OPD             & 12.5 & 9.2 & 44.1 & 21.9          & --                                                   & 0.84 & $+0.69$ \\
\quad $-$ feature change  & 10.0 & 7.1 & 40.6 & \textbf{19.2} & $\mathbf{-2.7}$\,{\scriptsize$[-5.3, -0.4]$} & 0.20 & $-0.48$ \\
\quad $-$ shuffled change & 14.2 & 9.2 & 46.9 & 23.4          & $+1.5$\,{\scriptsize$[-0.8, +4.1]$}          & 0.85 & $+0.66$ \\
\bottomrule
\end{tabular*}
\end{table}

\textbf{The reweighting found by the swap readout carries the warm-up's
benefit.} To test whether this reweighting causally accounts for the warm-up's
benefit, we intervene on the students' features directly, leaving their weights
unchanged (Table~\ref{tab:feature-intervention}). At every token, we decode the
difference between the swap codes of the student after and before the warm-up
through the crosscoder's student decoder, and add this change to the residual
stream of the directly distilled student at the crosscoder's layer, or subtract
it from that of the student distilled after the warm-up
(Appendix~\ref{app:feature-intervention}). Adding the change raises the avg@8
of the directly distilled student from $18.6\%$ to $21.3\%$, close to the
$21.9\%$ of the warmed-up student, and removing it lowers the avg@8 of the
warmed-up student to $19.2\%$, close to direct OPD ($\Delta=+2.7$ and $-2.7$
points; both $95\%$ bootstrap intervals exclude zero). The swap readout
confirms that the intervention transfers the reweighting itself: the directly
distilled student moves further along OPD's direction and takes on the
warm-up's reweighting beyond it, whereas the warmed-up student loses both. A
control that shuffles the feature identities of the change leaves the
reweighting nearly intact and neither helps the directly distilled student nor
hurts the warmed-up one. The benefit therefore depends on which shared features
are reweighted, which is exactly what the swap readout identifies.

This answers the question we began with. Both the warm-up and OPD reweight
features the student already shares with the teacher. The warm-up helps OPD not
by giving the student features that OPD cannot, but through this reweighting,
which does part of OPD's reweighting in advance and adds reweighting that OPD
would not make on its own.

\begin{takeaway}
\textbf{Takeaway.} An SFT warm-up helps OPD not by giving the student new
features, but by reweighting the shared ones, partly as OPD would and partly in
ways OPD would not.
\end{takeaway}

\section{Conclusion}
\label{sec:conclusion}

This work studies what on-policy distillation (OPD) distills into the student
through the lens of sparse crosscoders. To this end, we propose the swap
readout, which measures how training changes a student's use of each crosscoder
feature, even for checkpoints the crosscoder has never seen. Across three OPD
settings, we find that OPD creates no features of the student's own and passes
on none of the teacher's; instead, it slightly reweights the features the
student already shares with the teacher, and where it changes the student most,
this reweighting concentrates on the tokens at which the trace decides its next
move. The SFT warm-up on the teacher's rollouts that commonly precedes OPD adds
no features either. It reweights the shared features partly along OPD's
direction and partly beyond it, and imposing this reweighting on the features
of a directly distilled student, without changing its weights, brings its
accuracy close to that of the warmed-up student.
These findings suggest that OPD behaves more like a \emph{reweighting} of
existing features than an \emph{acquisition} of new ones: what the student
learns from the teacher is how to use the features they already share.

\subsection*{AI use statement}

We used generative AI tools, including an LLM-based coding assistant, in
several parts of this work. For the manuscript, they drafted and revised text,
which we then edited. For the experiments, they wrote, debugged, and ran
analysis and plotting code, including the code for the swap-readout analyses,
the feature-level intervention, and the figures, and they suggested some
analyses and controls, which we evaluated before adopting. The research
questions, the selection of experiments, and the conclusions are our own. We
reviewed all AI-assisted text, checked the code and its outputs against the
reported results, and take full responsibility for the content of this paper,
including any errors.

\bibliography{iclr2027_conference}
\bibliographystyle{iclr2027_conference}

\appendix
\clearpage
\noindent\textbf{Roadmap.}
\begin{itemize}[leftmargin=*,itemsep=1pt,topsep=2pt]
\item Appendix~\ref{app:related} reviews related work on the mechanisms and failure modes of OPD and on model diffing with sparse crosscoders.
\item Appendix~\ref{app:swap} derives why the swap readout isolates the change of the student, which the other readouts of Section~\ref{sec:prelim:swap} cannot, and verifies its premises on our crosscoders.
\item Appendix~\ref{app:experimental-details} describes how we train the OPD students (\ref{app:opd}) and the SFT warm-up (\ref{app:warmup}), how we evaluate them (\ref{app:eval}), and how we train and validate the crosscoders (\ref{app:crosscoders}).
\item Appendix~\ref{app:analysis-details} describes how we compute the reported quantities: the feature statistics read with the swap readout (\ref{app:feature-stats}), the decision tokens and policy changes of Section~\ref{sec:analysis-local} (\ref{app:decision}), and the decomposition of the warm-up's reweighting (\ref{app:decomposition}) and the feature-level intervention (\ref{app:feature-intervention}) of Section~\ref{sec:sft-reweight}.
\item Appendix~\ref{app:feature-examples} visualizes the features behind our findings: the swap readout on held-out passages (\ref{app:swap-trace}), the features the teachers dominate (\ref{app:cards-teacher}), the decision-token features that OPD reweights (\ref{app:cards-decision}), and the features the warm-up moves along (\ref{app:cards-along}) and beyond (\ref{app:cards-warmup}) OPD's direction.
\end{itemize}

\section{Related Work}
\label{app:related}
\textbf{Mechanistic Understanding and Failure Modes of OPD.}
On-policy distillation (OPD) trains a student on its own trajectories while using a teacher to provide dense token-level supervision at visited states, thereby reducing the exposure bias associated with fixed demonstrations~\citep{agarwal2024policy,lu2025onpolicydistillation,bengio2015scheduled,ross2011reduction,gu2024minillm,yu2026mismatch}. Recent studies have moved beyond evaluating OPD's empirical effectiveness to examining its optimization mechanism, emphasizing teacher--student overlap and the conditions under which OPD succeeds or fails~\citep{li2026rethinking,zhu2026many}. However, a locally small teacher--student discrepancy does not necessarily lead to globally desirable rollouts: OPD can exhibit length inflation and repetition, enter KL-based agreement traps, or receive unreliable supervision on student-induced prefixes~\citep{luo2026demystifying,xin2026escaping,liu2026your}; related issues also arise when privileged teacher context alters the learning signal for long reasoning traces~\citep{kaur2026rethinking}. These findings suggest that teacher--student agreement can be locally uninformative. In particular, a repetitive student prefix may condition the teacher to favor the same continuation, producing near-zero local discrepancy despite a degenerate complete response. 


\textbf{Model Diffing with Sparse Crosscoders.}
Sparse autoencoders decompose neural activations into sparse combinations of learned features~\citep{cunningham2023sparse,bricken2023towards}. Sparse crosscoders extend this approach to compare shared and model-specific features across models~\citep{lindsey2024sparse}. \citet{minder2026overcoming} identify sparsity artifacts that make shared features appear model-specific and develop Latent Scaling and BatchTopK crosscoders to address them. \citet{jiralerspong2026cross} introduce dedicated features for comparisons across architectures. Applications to reasoning models reveal features associated with self-reflection and verification in distilled models~\citep{baek2025towards}, as well as features that modulate \emph{wait} tokens and subsequent reasoning patterns~\citep{troitskii2025internal}. \citet{shi2026does} compare SFT and RL, finding that SFT introduces specialized features while RL largely preserves base-model representations. We study how OPD and its teacher-trajectory SFT warm-up change the student's feature usage, using a swap readout to compare checkpoints within a fixed crosscoder.
\section{Theoretical Analysis of the Swap Readout}
\label{app:swap}

\paragraph{Setup.}
Let $\mathbf h_B(x)$, $\mathbf h_O(x)$, and $\mathbf h_T(x)$ be the activations of the student before training ($B$), the student after training ($O$), and the teacher ($T$) on input $x$ after the normalization of the crosscoder (Appendix~\ref{app:crosscoders}), let $\mathbf W_B$, $\mathbf W_O$, and $\mathbf W_T$ be the encoder matrices and $\mathbf b$ the encoder bias, and let $\mathbf d_{mj}$ be the decoder direction of feature $j$ in model $m$. The encoder computes the pre-activation
\begin{equation}
\label{eq:app-preact}
\mathbf a(x)=\mathbf W_B\,\mathbf h_B(x)+\mathbf W_O\,\mathbf h_O(x)+\mathbf W_T\,\mathbf h_T(x)+\mathbf b,
\end{equation}
and at inference feature $j$ fires when $\operatorname{ReLU}(a_j(x))\sum_m\|\mathbf d_{mj}\|_2>\theta$, that is, when $a_j(x)$ exceeds a feature-specific threshold $\theta_j$. We reparametrize the two student slots by $\mathbf S=\mathbf W_B+\mathbf W_O$ and $\mathbf N=\frac12(\mathbf W_B-\mathbf W_O)$, and write $\mathbf m=\frac12(\mathbf h_B+\mathbf h_O)$ and $\boldsymbol\delta=\mathbf h_B-\mathbf h_O$ for the mean and the difference of the two students, so that, exactly,
\begin{equation}
\label{eq:app-split}
\mathbf W_B\,\mathbf h_B+\mathbf W_O\,\mathbf h_O=\mathbf S\,\mathbf m+\mathbf N\,\boldsymbol\delta.
\end{equation}
With $\mathbf c(x)=\mathbf W_T\,\mathbf h_T(x)+\mathbf b$, the swap readout of a checkpoint $s$ places its rescaled activation $\bar{\mathbf h}_s$ in both student slots:
\begin{equation}
\label{eq:app-swap}
\mathbf a_s(x)=\mathbf W_B\,\bar{\mathbf h}_s(x)+\mathbf W_O\,\bar{\mathbf h}_s(x)+\mathbf c(x)=\mathbf S\,\bar{\mathbf h}_s(x)+\mathbf c(x).
\end{equation}

\paragraph{Training leaves $\mathbf N$ largely undetermined.}
By Eq.~\eqref{eq:app-split}, the pre-activation depends on $\mathbf S$ through $\mathbf S\mathbf m$ and on $\mathbf N$ only through $\mathbf N\boldsymbol\delta$. With $\mathbf g=\partial\ell/\partial\mathbf a$ the gradient of the loss with respect to the pre-activation, the chain rule gives $\nabla_{\mathbf S}\E[\ell]=\E[\mathbf g\,\mathbf m^{\top}]$ and $\nabla_{\mathbf N}\E[\ell]=\E[\mathbf g\,\boldsymbol\delta^{\top}]$: the gradient on $\mathbf N$ is the gradient on $\mathbf S$ with the mean of the two students replaced by their difference, which is an order of magnitude smaller (Table~\ref{tab:app-swap}). Because $\mathbf W_B$ and $\mathbf W_O$ are initialized independently, $\mathbf N$ starts as a random matrix with about half the norm of $\mathbf S$, and training, which barely acts on it, leaves much of this random part in place. Training therefore determines $\mathbf S$, but not $\mathbf N$ beyond its action on the small differences $\boldsymbol\delta$.

\begin{proposition}[Swap readout]
\label{prop:app-swap}
(i) The swap pre-activation $\mathbf a_s(x)$ is unchanged when $(\mathbf W_B,\mathbf W_O)$ is replaced by $(\mathbf W_B+\mathbf E,\mathbf W_O-\mathbf E)$ for any matrix $\mathbf E$, and among the readouts $\lambda\,\mathbf W_B\,\bar{\mathbf h}_s+\mu\,\mathbf W_O\,\bar{\mathbf h}_s+\mathbf c$, it is the only one with this invariance that equals the joint pre-activation of Eq.~\eqref{eq:app-preact} whenever $\mathbf h_B=\mathbf h_O=\bar{\mathbf h}_s$. (ii) For two checkpoints $s$ and $s'$ on the same input, $\mathbf a_{s'}(x)-\mathbf a_s(x)=\mathbf S\,\big(\bar{\mathbf h}_{s'}(x)-\bar{\mathbf h}_s(x)\big)$, and for the two students, without rescaling, the joint pre-activation is $\mathbf a(x)=\frac12\big(\mathbf a_B(x)+\mathbf a_O(x)\big)+\mathbf N\,\boldsymbol\delta(x)$.
\end{proposition}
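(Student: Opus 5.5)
The proof is pure linear algebra on the pre-activations, so I would organize it around the reparametrization of Eq.~\eqref{eq:app-split} and treat the two parts separately.

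For (i), invariance is immediate: replacing $(\mathbf W_B,\mathbf W_O)$ by $(\mathbf W_B+\mathbf E,\mathbf W_O-\mathbf E)$ leaves $\mathbf S=\mathbf W_B+\mathbf W_O$ unchanged. It also leaves $\mathbf c$ unchanged, so by Eq.~\eqref{eq:app-swap} $\mathbf a_s$ is unchanged. Equivalently, the substitution shifts $\mathbf N$ by $\mathbf E$ and leaves $\mathbf S$ fixed.

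For uniqueness, take a general readout $\lambda\mathbf W_B\bar{\mathbf h}_s+\mu\mathbf W_O\bar{\mathbf h}_s+\mathbf c$. Under the substitution it changes by $(\lambda-\mu)\mathbf E\bar{\mathbf h}_s$. Requiring this to vanish for every $\mathbf E$ and every nonzero activation $\bar{\mathbf h}_s$ forces $\lambda=\mu$; I would choose, for instance, $\mathbf E=\mathbf u\bar{\mathbf h}_s^\top$ with $\mathbf u$ any nonzero vector. The readout is then $\lambda\mathbf S\bar{\mathbf h}_s+\mathbf c$. When $\mathbf h_B=\mathbf h_O=\bar{\mathbf h}_s$, the joint pre-activation of Eq.~\eqref{eq:app-preact} equals $\mathbf S\bar{\mathbf h}_s+\mathbf c$. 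Matching the two for all such inputs gives $(\lambda-1)\mathbf S\bar{\mathbf h}_s=0$ for all $\bar{\mathbf h}_s$, hence $\lambda=1$ provided $\mathbf S\neq 0$.

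The only real care point is quantifier bookkeeping. The conditions must be read as holding for all $\mathbf E$ and all inputs, or at least generic ones. I would also state the mild nondegeneracy assumption $\mathbf S\neq0$ explicitly, since it plainly holds for a trained crosscoder.

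For (ii), the first identity follows by subtracting two instances of Eq.~\eqref{eq:app-swap}: $\mathbf c(x)$ depends only on $\mathbf h_T(x)$ and $\mathbf b$, and these are the same for both checkpoints on the same input, so $\mathbf c(x)$ cancels. For the second identity, set $\bar{\mathbf h}=\mathbf h$ (no rescaling). Then $\tfrac12(\mathbf a_B+\mathbf a_O)=\mathbf S\,\tfrac12(\mathbf h_B+\mathbf h_O)+\mathbf c=\mathbf S\mathbf m+\mathbf c$. Substituting Eq.~\eqref{eq:app-split} into Eq.~\eqref{eq:app-preact} gives $\mathbf a=\mathbf S\mathbf m+\mathbf N\boldsymbol\delta+\mathbf c$. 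Comparing the two expressions yields the claim.

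As a sanity check, I would verify Eq.~\eqref{eq:app-split} directly:
\[
\mathbf S\mathbf m+\mathbf N\boldsymbol\delta=\tfrac12(\mathbf W_B+\mathbf W_O)(\mathbf h_B+\mathbf h_O)+\tfrac12(\mathbf W_B-\mathbf W_O)(\mathbf h_B-\mathbf h_O).
\]
The cross terms cancel, leaving $\mathbf W_B\mathbf h_B+\mathbf W_O\mathbf h_O$. No step is a genuine obstacle.
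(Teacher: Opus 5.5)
Your proof is correct and follows the paper's argument. You get invariance because the substitution fixes $\mathbf S$ and $\mathbf c$, uniqueness from $\lambda=\mu$ together with $\lambda+\mu=2$, and part (ii) by cancelling $\mathbf c$ and comparing $\mathbf S\mathbf m+\mathbf c$ with $\mathbf S\mathbf m+\mathbf N\boldsymbol\delta+\mathbf c$. The paper reads off $\lambda=\mu$ from $\lambda\mathbf W_B+\mu\mathbf W_O=\frac{\lambda+\mu}{2}\mathbf S+(\lambda-\mu)\mathbf N$, which is equivalent to your direct computation of the change $(\lambda-\mu)\mathbf E\bar{\mathbf h}_s$, and your stated nondegeneracy conditions ($\bar{\mathbf h}_s\neq 0$, $\mathbf S\neq 0$) are left implicit there.
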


\begin{proof}
(i) The replacement leaves $\mathbf S$ unchanged, and $\mathbf a_s$ depends on the student encoders only through $\mathbf S$. Since $\lambda\mathbf W_B+\mu\mathbf W_O=\frac{\lambda+\mu}{2}\,\mathbf S+(\lambda-\mu)\,\mathbf N$, invariance for every $\mathbf E$ requires $\lambda=\mu$, and agreement with the joint pre-activation $\mathbf S\,\bar{\mathbf h}_s+\mathbf c$ requires $\lambda+\mu=2$. (ii) The term $\mathbf c(x)$ is the same for both checkpoints and cancels. By Eq.~\eqref{eq:app-swap}, $\frac12(\mathbf a_B+\mathbf a_O)=\mathbf S\,\mathbf m+\mathbf c$, and by Eq.~\eqref{eq:app-split}, $\mathbf a=\mathbf S\,\mathbf m+\mathbf N\,\boldsymbol\delta+\mathbf c$.
\end{proof}

Part (i) separates the swap readout from the alternatives. Reading a student through its own slot, $(\lambda,\mu)=(1,0)$, applies the undetermined $\mathbf N$ to the whole activation and halves the determined contribution, and replacing only one student slot applies $\mathbf N$ to exactly the change to be measured. Part (ii) shows that the swap readout isolates the student. The change in the pre-activation of feature $j$ is the projection $\mathbf s_j^{\top}(\bar{\mathbf h}_{s'}-\bar{\mathbf h}_s)$ of the student's own change onto the determined encoder direction $\mathbf s_j$, the $j$-th row of $\mathbf S$, with no contribution from the teacher or from $\mathbf N$; and the swap readouts of the two students average to the joint pre-activation up to a term that acts only on their difference. A change in firing rate follows directly: with $U=a_{s,j}(x)$ and $\Delta=\mathbf s_j^{\top}(\bar{\mathbf h}_{s'}(x)-\bar{\mathbf h}_s(x))$, expanding $\Pr(U+\Delta>\theta_j)$ to first order in $\Delta$ gives $r_{s'}-r_s\approx p_U(\theta_j)\,\E[\Delta\mid U=\theta_j]$, where $p_U$ is the density of $U$. The change thus measures how far the student's activations move along $\mathbf s_j$ on the inputs where the feature is at the edge of firing, which is the sense in which we call it a reweighting of shared features. Finally, rescaling each checkpoint to the mean norm of the student before training makes the readout blind to a global rescaling of the residual stream, which would otherwise move features across their fixed thresholds.

\paragraph{Verification on our crosscoders.}
We measure these quantities on $2{,}048$ held-out tokens of reasoning text for each of our four three-model crosscoders (Table~\ref{tab:app-swap}). Training leaves $\mathbf N$ at the scale of its random initialization: $\|\mathbf N\|_F/\|\mathbf S\|_F$ is $0.52$--$0.61$, against about $0.5$ at initialization, and the two student encoders of a feature are nearly orthogonal or even anti-correlated, although the activations of the two students differ by only $7$--$12\%$. In the own-slot readout, the undetermined term is as large as the determined one, and the readout shares only about a third of its active features with the swap readout. In a readout that replaces one student slot, the undetermined term is as large as the determined part of the change it measures. In the joint encoding, the undetermined term contributes only $4$--$6\%$ of the pre-activation because it acts only on the small difference between the students, which is why the joint encoding reconstructs all three models well yet cannot attribute a change to either student.

\begin{table}[h]
\centering
\small
\caption{\textbf{The part of the student encoders that training leaves undetermined.} Ratios in the middle block compare the root mean square of the term in $\mathbf N$ with that of the corresponding term in $\mathbf S$.}
\label{tab:app-swap}
\begin{tabular*}{\linewidth}{@{\extracolsep{\fill}}lcccc@{}}
\toprule
 & JustRL & Skywork & R1-7B & Qwen3 \\
\midrule
$\|\mathbf N\|_F/\|\mathbf S\|_F$ (about $0.5$ at initialization) & $0.52$ & $0.61$ & $0.61$ & $0.56$ \\
Cosine of the rows of $\mathbf W_B$ and $\mathbf W_O$ ($0$ at initialization) & $-0.06$ & $-0.24$ & $-0.24$ & $-0.16$ \\
$\|\boldsymbol\delta\|/\|\mathbf m\|$ & $0.12$ & $0.07$ & $0.07$ & $0.11$ \\
\midrule
Own slot: $\mathbf N\,\mathbf h_B$ against $\frac12\mathbf S\,\mathbf h_B$ & $1.04$ & $1.28$ & $1.29$ & $1.16$ \\
Replaced slot: $\mathbf N\,\boldsymbol\delta$ against $\frac12\mathbf S\,\boldsymbol\delta$ & $1.08$ & $1.28$ & $1.27$ & $1.14$ \\
Joint encoding: $\mathbf N\,\boldsymbol\delta$ against $\mathbf S\,\mathbf m$ & $0.06$ & $0.04$ & $0.04$ & $0.06$ \\
\midrule
Jaccard of the active features, own slot and swap readouts & $0.34$ & $0.28$ & $0.28$ & $0.27$ \\
\bottomrule
\end{tabular*}
\end{table}

\section{Experimental Setup}
\label{app:experimental-details}

\subsection{OPD Training}
\label{app:opd}

Section~\ref{sec:analysis} distills DeepSeek-R1-Distill-Qwen-1.5B~\citep{guo2025deepseek} from the three teachers of Table~\ref{tab:settings}, and Section~\ref{sec:further} distills Qwen3-1.7B-Base~\citep{yang2025qwen3} from Qwen3-4B-Base-GRPO~\citep{li2026rethinking}, once from the base student and once from the student after the warm-up. All five OPD runs follow the vanilla OPD recipe of \citet{li2026rethinking} with the same hyperparameters (Table~\ref{tab:app-opd}), implemented in verl~\citep{sheng2024hybridflow}. At each step, the current student samples $4$ rollouts for each of $64$ prompts, and the frozen teacher scores every position of these rollouts. The per-token signal is the reverse KL divergence $D_{\mathrm{KL}}(q\,\|\,p)$ between the student's next-token distribution $q$ and the teacher's $p$, approximated by the log-ratio between the two on the student's $16$ most probable tokens at each position, weighted by the student's probabilities renormalized over these tokens. No other loss or reward is added. Each run makes one pass over DAPO-Math-17k~\citep{yu2026dapo}, which takes $279$ steps, and we analyze its final checkpoint.

\begin{table}[h]
\centering
\small
\caption{\textbf{OPD hyperparameters,} shared by all OPD runs in the paper.}
\label{tab:app-opd}
\begin{tabular}{@{}p{0.4\linewidth}p{0.56\linewidth}@{}}
\toprule
Hyperparameter & Value \\
\midrule
Training prompts & DAPO-Math-17k, one epoch (279 steps) \\
Prompts per step $\times$ rollouts per prompt & $64\times4$ \\
Rollout sampling & temperature 1.0, top-$p$ 1.0 \\
Maximum prompt / response length & 1,024 / 7,168 tokens \\
Teacher signal & log-ratio on the student's top-16 tokens \\
KL penalty, entropy bonus, outcome reward & none \\
Optimizer & AdamW, $\beta_1=0.9$, $\beta_2=0.999$, weight decay 0.01 \\
Learning rate & $10^{-6}$, constant \\
Gradient clipping & 1.0 \\
Optimizer updates per step & 1 \\
Trained parameters & all \\
\bottomrule
\end{tabular}
\end{table}

\subsection{SFT Warm-up}
\label{app:warmup}

Following Simple-OPD~\citep{liu2026simple}, we build the warm-up data from the teacher's own rollouts on the OPD training prompts. We draw a fixed random subset of $2{,}048$ prompts from DAPO-Math-17k, write each as ``\texttt{\{question\} Please reason step by step, and put your final answer within \textbackslash boxed\{\}.}'' in the student's chat template, and sample $4$ responses per prompt from Qwen3-4B-Base-GRPO with temperature $0.6$, top-$p$ $0.95$, top-$k$ $20$, and at most $16{,}384$ tokens. We keep a response if it ends on its own, gives a final answer in \texttt{\textbackslash boxed\{\}}, and does not degenerate into repetition, which leaves $7{,}385$ of the $8{,}192$ responses. Of the $1{,}337$ prompts with at least one correct kept response, we sample $704$ at random, the size of the warm-up set of Simple-OPD, and keep one correct response for each; with its prompt, an example averages $2{,}981$ tokens. Following the recipe of Simple-OPD, we fine-tune the student on these $704$ examples with LoRA~\citep{hu2022lora} of rank $32$ and scaling $\alpha=32$ on all linear layers, a learning rate of $5\times10^{-5}$, and batches of $8$ sequences of at most $16{,}384$ tokens for $175$ steps, about two epochs. After merging the adapter into the weights, we run OPD from this student with the same teacher, prompts, and hyperparameters as the run without the warm-up.

\subsection{Evaluation}
\label{app:eval}

We evaluate on AIME 2024 ($30$ problems), AIME 2025 ($30$), and AMC 2023 ($40$). Every model samples with temperature $0.7$ and top-$p$ $0.95$, up to $31{,}744$ tokens, in the chat template of the base student, and responses that are truncated or give no gradable answer count as incorrect. We report avg@$n$, the accuracy over $n$ samples per problem averaged over the problems of a benchmark. ``Avg.''\ is the mean over the three benchmarks, and ``Recovered'' is the share of the teacher's advantage over the base student in Avg.\ that a student attains. For the settings of Section~\ref{sec:analysis}, we draw $n=256$ samples per problem (Table~\ref{tab:app-eval-r1}); for Section~\ref{sec:further}, we follow Simple-OPD and use $n=8$ (Table~\ref{tab:app-eval-qwen3}, which adds the student after the warm-up alone to Table~\ref{tab:sft-warmup}).

\begin{table}[h]
\centering
\small
\caption{\textbf{Accuracy in the settings of Section~\ref{sec:analysis}}, avg@256 (\%). All OPD students start from the same base student.}
\label{tab:app-eval-r1}
\begin{tabular*}{\linewidth}{@{\extracolsep{\fill}}llccccc@{}}
\toprule
Setting & Model & AIME24 & AIME25 & AMC23 & Avg. & Recovered \\
\midrule
-- & Base student & 29.3 & 23.8 & 71.7 & 41.6 & -- \\
\midrule
\multirow{2}{*}{JustRL}  & OPD student & 46.9 & 36.0 & 86.9 & 56.6 & 80\% \\
                         & Teacher     & 52.6 & 38.4 & 90.4 & 60.5 & -- \\
\midrule
\multirow{2}{*}{Skywork} & OPD student & 39.1 & 30.0 & 79.0 & 49.4 & 26\% \\
                         & Teacher     & 67.2 & 52.1 & 93.9 & 71.1 & -- \\
\midrule
\multirow{2}{*}{R1-7B}   & OPD student & 31.5 & 25.8 & 73.3 & 43.5 & 9\% \\
                         & Teacher     & 54.6 & 41.1 & 90.5 & 62.0 & -- \\
\bottomrule
\end{tabular*}
\end{table}

\begin{table}[h]
\centering
\small
\caption{\textbf{Accuracy in the setting of Section~\ref{sec:further}}, avg@8 (\%), including the student after the warm-up alone.}
\label{tab:app-eval-qwen3}
\begin{tabular*}{\linewidth}{@{\extracolsep{\fill}}lccccc@{}}
\toprule
 & AIME24 & AIME25 & AMC23 & Avg. & Recovered \\
\midrule
Base student  & 4.6  & 2.1  & 22.5 & 9.7  & --   \\
Warm-up       & 9.6  & 6.3  & 40.9 & 18.9 & 35\% \\
OPD           & 7.9  & 4.6  & 40.3 & 17.6 & 30\% \\
Warm-up + OPD & 12.1 & 9.6  & 44.4 & 22.0 & 46\% \\
Teacher       & 22.9 & 20.0 & 65.9 & 36.3 & --   \\
\bottomrule
\end{tabular*}
\end{table}

\subsection{Crosscoders}
\label{app:crosscoders}

All crosscoders share the training data, the activation site, and the hyperparameters in Table~\ref{tab:app-cc}. The training data consist of $400$M tokens, drawn in equal parts from OpenThoughts-114k~\citep{guha2025openthoughtsdatarecipesreasoning}, formatted with the chat template, and from RedPajama-Data-1T-Sample~\citep{weber2024redpajama}, and cut into sequences of $512$ tokens with the tokenizer of each model family. All models of a crosscoder read the same sequences, and we take their residual stream at the output of a middle block, block $14$ of $28$ for the R1-family models and Qwen3-1.7B-Base and block $18$ of $36$ for Qwen3-4B-Base-GRPO (counting from $0$). Each model's activations are centered and divided by a single factor that sets their total variance to $1$, which puts models of different widths on the same scale. The activations are stored in $401$ shards, of which four ($0$, $133$, $267$, and $400$, about $3.0$M tokens) are held out from training; all feature statistics read with the swap readout are computed on them.

Section~\ref{sec:analysis} uses one three-model crosscoder per setting, trained on the base student, the OPD student, and the teacher. Section~\ref{sec:further} uses five crosscoders of Qwen3 models: the three-model crosscoder of the base student, the OPD student, and the teacher, through which we read every student with the swap readout; the three-model crosscoder of the student after the warm-up, its OPD student, and the teacher, which gives the teacher-dominant features after the warm-up; and the two-model crosscoders of the teacher with the student before and after the warm-up, and of the student before and after the warm-up, which give the NRN in Figure~\ref{fig:sft-nrn}.

Table~\ref{tab:app-cc-quality} reports the reconstruction quality of the crosscoders of Section~\ref{sec:analysis}. The swap readout reconstructs each student as well as the joint encoding does. The same holds in Section~\ref{sec:further}: through the three-model crosscoder trained before the warm-up, the swap readout explains $0.755$, $0.760$, $0.756$, and $0.759$ of the variance of the base student, the OPD student, the student after the warm-up, and the student after the warm-up and OPD, measured on $0.2$M held-out tokens.

\begin{table}[h]
\centering
\small
\caption{\textbf{Crosscoder hyperparameters,} shared by all crosscoders in the paper.}
\label{tab:app-cc}
\begin{tabular}{@{}p{0.3\linewidth}p{0.66\linewidth}@{}}
\toprule
Hyperparameter & Value \\
\midrule
Dictionary size & 32,768 features \\
Sparsity & BatchTopK, $k=50$ active features per token on average over a batch \\
Inference & threshold estimated during training (from step 1,000, moving average with rate 0.999) \\
Batch size & 4,096 token positions \\
Optimizer & Adam, $\beta_1=0.9$, $\beta_2=0.999$ \\
Learning rate & $1.41\times10^{-4}$, 1,000 warm-up steps, no decay \\
Training length & about one pass over the training tokens (96,917 scheduled updates) \\
Auxiliary loss & dead-feature loss with weight $1/32$ \\
Precision and seed & float32, seed 42 \\
\bottomrule
\end{tabular}
\end{table}

\begin{table}[h]
\centering
\small
\caption{\textbf{Reconstruction quality of the crosscoders of Section~\ref{sec:analysis}} on the held-out tokens. L0: mean number of active features per token under the joint encoding. FVE: fraction of the variance of a model's activation explained by its reconstruction, for the base student (Base), the OPD student (OPD), and the teacher.}
\label{tab:app-cc-quality}
\begin{tabular*}{\linewidth}{@{\extracolsep{\fill}}lcccccc@{}}
\toprule
& & \multicolumn{3}{c}{FVE, joint encoding} & \multicolumn{2}{c}{FVE, swap readout} \\
\cmidrule(lr){3-5}\cmidrule(l){6-7}
Setting & L0 & Base & OPD & Teacher & Base & OPD \\
\midrule
JustRL  & 50.5 & 0.816 & 0.813 & 0.811 & 0.817 & 0.814 \\
Skywork & 48.9 & 0.814 & 0.822 & 0.767 & 0.814 & 0.821 \\
R1-7B   & 48.7 & 0.814 & 0.824 & 0.755 & 0.815 & 0.823 \\
\bottomrule
\end{tabular*}
\end{table}

\section{Analysis Details}
\label{app:analysis-details}

\subsection{Feature Statistics}
\label{app:feature-stats}

Unless stated otherwise, we read feature statistics with the swap readout on the held-out shards: $3{,}019{,}499$ token positions, every position of the $5{,}909$ held-out sequences except the first, about two thirds of them from OpenThoughts. We process the sequences in chunks of $32$ ($16{,}352$ positions) and, within each chunk, rescale the activations of every student checkpoint so that their mean norm equals that of the student before training. A feature fires at a position when its entry of the code is positive, and its firing count is the number of positions at which it fires. We measure the change in the use of a feature between two checkpoints by $\log_2\frac{c'+1}{c+1}$, where $c$ and $c'$ are its firing counts before and after training.

In Figure~\ref{fig:feature-reweighting}d, we consider the features that fire at least $100$ times in the two checkpoints together, $15{,}218$, $15{,}620$, and $15{,}332$ features under JustRL, Skywork, and R1-7B; the dashed band marks a change by a factor of at most $1.2$. A feature counts as gained or lost when it fires at least $10$ times in one checkpoint and never in the other. The share of the features that change on a token is one minus the Jaccard similarity of the sets of features active before and after training; it averages $0.12$ under JustRL and $0.08$ under Skywork and R1-7B. The JustRL teacher has the student's width, so we read its firing rates by placing its activation, rescaled in the same way, in all three slots.

\subsection{Decision Tokens and Policy Changes}
\label{app:decision}

The decision tokens of Section~\ref{sec:analysis-local} are the words \emph{Wait}, \emph{wait}, and \emph{actually} for reflection; \emph{Hmm}, \emph{maybe}, \emph{perhaps}, and \emph{Alternatively} for hesitation; and \emph{So}, \emph{Therefore}, \emph{Hence}, \emph{Now}, \emph{Let}, and \emph{Similarly} for progression. We count their tokens with and without a leading space and keep those that occur at least $300$ times in the held-out reasoning text. To find which features fire on decision tokens, we take the $12$ held-out positions at which a feature fires most strongly in the student before OPD, and call it a decision-token feature if at least $6$ of them hold a decision token. Figure~\ref{fig:local-reweighting}a ranks the features that fire at least $1{,}000$ times in the two checkpoints together by the absolute change in their use. To compare how many features change at decision tokens and elsewhere, we average the per-token share of changed features over the positions that hold a decision token of each category and divide it by its average over all positions of the held-out reasoning text.

The policy analysis of Section~\ref{sec:analysis-local} runs on the models' own generations rather than on held-out text. From a pool of $1{,}546$ problems (AIME 2024, AIME 2025, AMC 2023, MATH-500, Minerva, and OlympiadBench), we take $200$ problems, preferring those on which the base student produces both correct and incorrect responses, and take $4$ rollouts of the base student for each, sampled as in Appendix~\ref{app:eval} and truncated to $4{,}096$ tokens. We measure the changes in the policy on these rollouts, at every fourth response position, over the vocabulary that the models share. At each such step, the KL divergence from the teacher's next-token distribution to the base student's measures the teacher's signal, and the KL divergence from the OPD student's next-token distribution to the base student's measures how much OPD changes the student. A step emits a category of decision tokens when its next token belongs to that category. The most-changed steps are the $5\%$ with the largest change, and the over-representation of decision tokens among them is the share of these steps that emit a decision token, divided by that share among all steps.

\subsection{Decomposing the Warm-up's Reweighting}
\label{app:decomposition}

In Section~\ref{sec:sft-reweight}, we read all students with the swap readout through the three-model crosscoder trained before the warm-up, on the four held-out shards, and keep the $8{,}524$ features that fire at least $1{,}000$ times in both the base student and the OPD student. For each student, the changes $\log_2\frac{c+1}{c_B+1}$ against the base student's counts $c_B$ form a vector over these features: $\mathbf o$ for the OPD student, $\mathbf s$ for the student after the warm-up, and $\mathbf u$ for the student after the warm-up and OPD. The slope $\beta=\mathbf s^{\top}\mathbf o/\mathbf o^{\top}\mathbf o$ is the least-squares fit of $\mathbf s\approx\beta\,\mathbf o$ through the origin, and the residual $\mathbf r_s=\mathbf s-\beta\,\mathbf o$ carries the share $\|\mathbf r_s\|^2/\|\mathbf s\|^2$ of the warm-up's change. To test whether this residual survives OPD, we fit $\mathbf u$ against $\mathbf o$ in the same way and compute the Spearman correlation across features between its residual $\mathbf r_u$ and $\mathbf r_s$.

A single feature dominates $\mathbf r_s$. It fires $16{,}493$ times in the base student, $15{,}414$ of them on the token ``\,\~{A}'', a garbled character in web text, but only $1{,}092$ times after the warm-up, while OPD leaves it unchanged ($16{,}486$). It alone carries $62\%$ of $\|\mathbf r_s\|^2$, so we report the residual share without it ($56\%$) as well as with it ($77\%$). Excluding it leaves $\beta$ and the Spearman correlation unchanged, and excluding the $10$ or $50$ features with the largest residual instead gives a share of about $55\%$. To describe the residual, we group the features by the share of their firings in the base student that fall on the two held-out shards of reasoning text: without the dominant feature, features with at least $80\%$ of their firings on reasoning text carry $50\%$ of $\|\mathbf r_s\|^2$, features with at least $80\%$ on general text $22\%$, and the remaining features $28\%$. We then read the $15$ features with the largest positive and the $15$ with the largest negative residual whose sign is the same in $\mathbf r_u$, by the tokens on which they fire and by their strongest contexts in the base student and the student after the warm-up.

Both $\mathbf s$ and $\mathbf o$ are measured from the same base student, so features that respond to any perturbation of its activations, or errors in its measured firing rates, could make the two changes look aligned even if the warm-up and OPD were unrelated. Two controls rule this out. The random shift adds the warm-up's change to the base student's activation at every token after rotating it by a fixed random orthogonal matrix $\mathbf Q$, which keeps the size of the change but not its direction:
\begin{equation}
\label{eq:app-rotation}
\bar{\mathbf h}_{\mathrm{rand}}(x)=\bar{\mathbf h}_B(x)+\big(\bar{\mathbf h}_S(x)-\bar{\mathbf h}_B(x)\big)\,\mathbf Q,
\end{equation}
where $\bar{\mathbf h}_B$ and $\bar{\mathbf h}_S$ are the activations of the student before and after the warm-up. Read in the same way, the random shift gives $\beta=-0.03$, and its residual is uncorrelated with $\mathbf r_u$ (Spearman $0.00$). Replacing the warm-up's change at every token by isotropic Gaussian noise of the same norm likewise gives no alignment ($\beta=-0.02$). For the control on disjoint tokens, in which the two changes share no measurement of the base student, we compute $\mathbf s$ on shard $0$ and $\mathbf o$ on shard $133$, the two held-out shards of reasoning text with about $1$M tokens each, which gives $\beta=0.39$, against $0.46$ when both are computed on shard $133$.

\subsection{Feature-Level Intervention}
\label{app:feature-intervention}

The intervention of Section~\ref{sec:sft-reweight} uses the three-model crosscoder trained before the warm-up. At every position except the first, the student before the warm-up, the student after the warm-up, and the teacher read the same prefix, and we compute the swap codes $\mathbf z_B$ and $\mathbf z_S$ of the two students (Eq.~\eqref{eq:swap}). The warm-up's change at that position is $\Delta\mathbf h=\mathbf D_O(\mathbf z_S-\mathbf z_B)/s_O$, where $\mathbf D_O$ is the decoder of the OPD student's slot and $s_O$ the scale of that slot's activation normalization. We rescale $\Delta\mathbf h$ to the mean norm of the student it is applied to and add it to, or subtract it from, that student's residual stream at the output of block~$14$; the student's later blocks then run on the modified stream. Since only a difference of two codes is added, the crosscoder's reconstruction error does not enter the modified activation. During generation, the change is recomputed at every new token from the three models' activations on the prefix generated so far. In the shuffled control, the entries of $\mathbf z_S-\mathbf z_B$ are permuted across features by a fixed random permutation before decoding, which keeps the codes but assigns them to other decoder directions; the resulting change has a similar size ($4.5\%$ of the norm of the residual stream, against $6.2\%$ for the warm-up's change). All conditions in Table~\ref{tab:feature-intervention} sample $8$ responses to each of the $100$ problems with temperature $0.7$ and top-$p$ $0.95$ and up to $8{,}192$ new tokens, and are graded as in Appendix~\ref{app:eval}; almost no correct response of these students is longer. For the $95\%$ bootstrap intervals, we compute for every problem the difference between the accuracy of the modified and the unmodified student over its $8$ samples, draw $100$ problems with replacement $2{,}000$ times, and average the drawn differences within each benchmark and then over the three benchmarks; the interval spans the $2.5$th to the $97.5$th percentile of these $2{,}000$ averages. An interval that excludes zero thus indicates a difference that does not hinge on which problems the benchmarks happen to contain. To read the resulting reweighting, we use that the change is added at the layer the crosscoder reads: on a fixed text, the modified student's activation there is its unmodified activation plus the change. We form this activation on the four held-out shards from the stored activations, read it with the swap readout, and place its change in firing rate in the decomposition of Appendix~\ref{app:decomposition}, which gives $\beta$ and, as the Spearman correlation of its residual with $\mathbf r_s$, $\rho$ in Table~\ref{tab:feature-intervention}.

\section{Feature Visualizations}
\label{app:feature-examples}

This appendix shows what the features behind our findings encode. Except for the teacher's features in Figure~\ref{fig:cards-teacher}, all features are read with the swap readout on the held-out tokens. Each card lists a feature's strongest contexts in the student before training and shades every token by the feature's activation on it; the numbers on the right give the feature's firing counts.

\subsection{The Swap Readout on Held-out Passages}
\label{app:swap-trace}

Figure~\ref{fig:swap-trace} reads the student before and after OPD under JustRL on three held-out passages of reasoning text, for the three decision-token features that OPD changes most (Figure~\ref{fig:local-reweighting}b). The two readouts of a passage differ only in the student's activation, and they show the outcomes of Figure~\ref{fig:swap-outcomes} on real text. The \emph{Wait} feature keeps firing on some occurrences of \emph{Wait} but stops on others; the \emph{So}/\emph{Therefore} feature keeps firing on the words that draw a conclusion but stops on some of the tokens around them; and the \emph{maybe}/\emph{perhaps} feature starts to fire on hedging phrases such as \emph{this suggests that perhaps}. In all three cases, OPD moves the feature's firing count toward the teacher's. The \emph{Wait} feature fires almost only on decision tokens ($86\%$ of its firings on the held-out reasoning text), whereas the other two fire most strongly on the decision words but mostly around them: $18\%$ of their firings fall on decision tokens, nine times the share of these tokens in the text, and their changes after OPD are spread in the same way.

\begin{figure}[h]
\centering
\includegraphics[width=\linewidth]{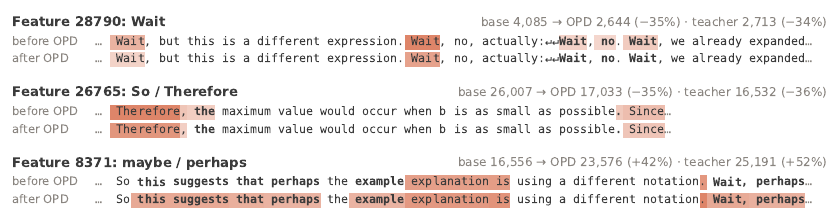}
\caption{\textbf{The swap readout on held-out passages, before and after OPD under JustRL.} Each pair of lines reads one passage in the student before and after OPD for one decision-token feature; shading: the feature's activation on each token; bold: the tokens on which the feature starts or stops firing. Right: the feature's firing count before OPD, after OPD, and in the teacher.}
\label{fig:swap-trace}
\end{figure}

\subsection{Features the Teachers Dominate}
\label{app:cards-teacher}

Figure~\ref{fig:cards-teacher} shows features that the Skywork and R1-7B teachers dominate (Section~\ref{sec:analysis-reweight}). Since the swap readout places a student's activation in the student slots, we read these features with the joint encoding of the three models, on $434$ sequences from the held-out shards and away from their first $16$ positions, where features tied to the position fire. They respond to titles of works, religious texts, LaTeX markup, dates of web posts, place names, and the notation of formulas, and the student's share of their decoder norm is the same before and after OPD.

\begin{figure}[h]
\centering
\includegraphics[width=\linewidth]{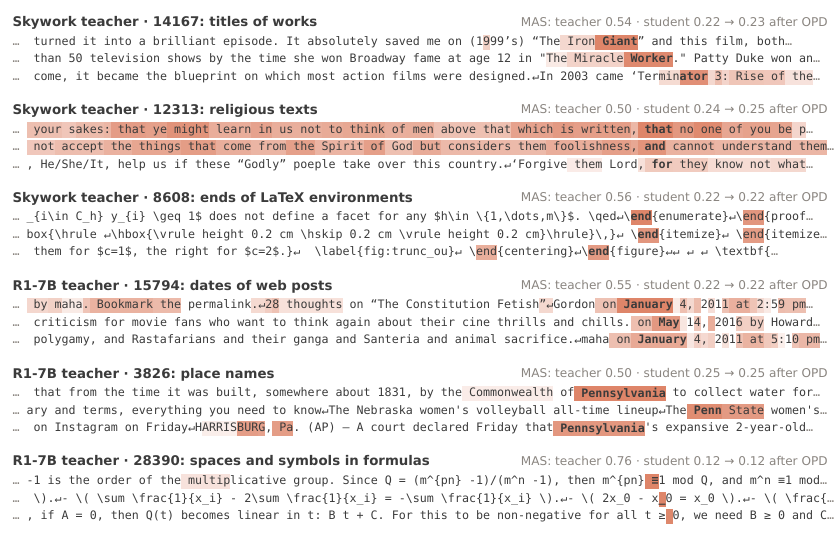}
\caption{\textbf{Features that the Skywork and R1-7B teachers dominate.}
Strongest contexts of each feature under the joint encoding of the three
models, away from the start of the sequence; shading: the feature's activation
on each token; bold: the strongest token. Right: the width-adjusted MAS of the
teacher and of the student before and after OPD.}
\label{fig:cards-teacher}
\end{figure}

\subsection{Decision-Token Features}
\label{app:cards-decision}

Figure~\ref{fig:cards-decision} shows the decision-token features among the $50$ features whose firing rate OPD changes most (Section~\ref{sec:analysis-local}): six under JustRL and two under Skywork. They fire most strongly on the decision words themselves, where the trace turns, and under JustRL, OPD moves each of them toward the teacher's firing count.

\begin{figure}[h]
\centering
\includegraphics[width=\linewidth]{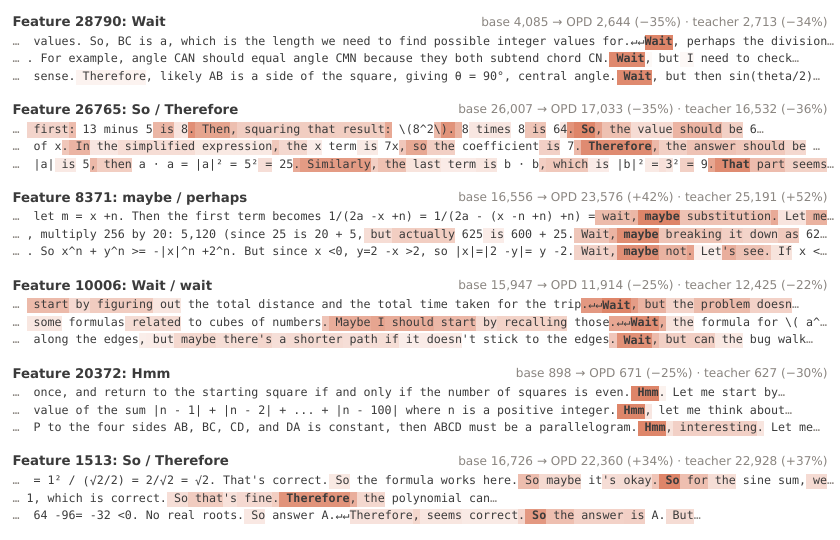}
\caption{\textbf{Decision-token features that OPD reweights under JustRL and Skywork.}
Strongest held-out contexts of each feature in the student before OPD, read
with the swap readout; shading and bold as in Figure~\ref{fig:cards-teacher}.
Right: the feature's firing count before OPD, after OPD, and, under JustRL, in
the teacher.}
\label{fig:cards-decision}
\end{figure}

\subsection{Features the Warm-up Moves along OPD's Direction}
\label{app:cards-along}

Figure~\ref{fig:cards-along} shows six of the $40$ features whose firing rate OPD changes most under Qwen3, three that OPD raises and three that it lowers (Section~\ref{sec:sft-reweight}); $35$ of the $40$ move the same way after the warm-up. The warm-up moves each of the six in OPD's direction by a smaller amount, and after the warm-up and OPD, the change matches or exceeds that of OPD alone. The features OPD and the warm-up raise fire when the trace settles on an answer after a long unresolved search, names a solution method, or ends an attempt that does not work out; those they lower fire on \emph{Wait} at the start of a paragraph, on the step after an equation, and on arithmetic within equations.

\begin{figure}[h]
\centering
\includegraphics[width=\linewidth]{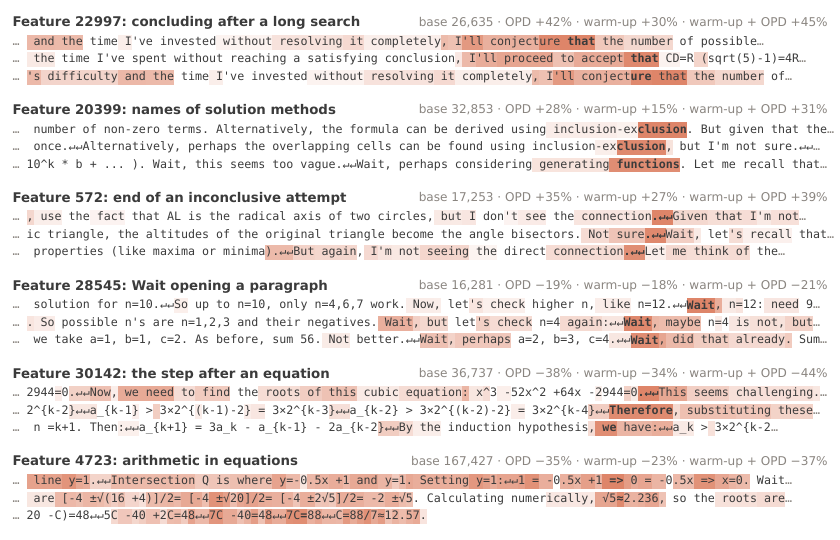}
\caption{\textbf{Features that the warm-up moves along OPD's direction under
Qwen3.} Contexts and shading as in Figure~\ref{fig:cards-decision}. Right: the
feature's firing count in the student before the warm-up and its change after
OPD, after the warm-up, and after the warm-up and OPD.}
\label{fig:cards-along}
\end{figure}

\subsection{Features the Warm-up Moves beyond OPD's Direction}
\label{app:cards-warmup}

Figure~\ref{fig:cards-warmup} shows six of the features that carry the warm-up's reweighting outside OPD's direction under Qwen3 (Section~\ref{sec:sft-reweight}): two for the conversation format, three for the style of reasoning, and one for mathematical notation. Unlike those of Figure~\ref{fig:cards-along}, their change after the warm-up is not a smaller version of OPD's: it falls where OPD barely moves them, runs against OPD, or goes further than OPD. Features for \emph{Wait} appear in both figures because a feature's change can lie partly along OPD's direction and partly outside it: the warm-up lowers these features in OPD's direction, but further than the slope $\beta$ predicts.

\begin{figure}[h]
\centering
\includegraphics[width=\linewidth]{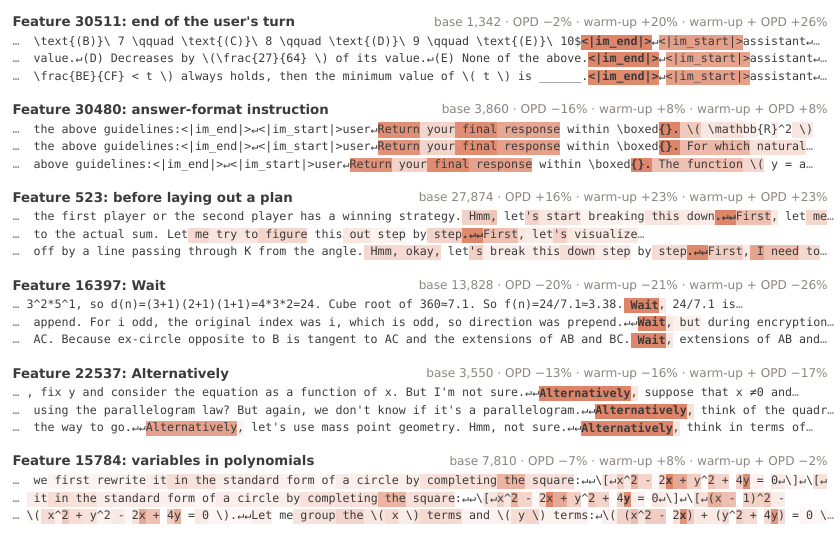}
\caption{\textbf{Features that carry the warm-up's reweighting outside OPD's
direction under Qwen3.} Contexts and shading as in
Figure~\ref{fig:cards-decision}. Right: the feature's firing count in the
student before the warm-up and its change after OPD, after the warm-up, and
after the warm-up and OPD.}
\label{fig:cards-warmup}
\end{figure}

\end{document}